\title[$\ell_{1}$ Regularized GTD Learning]
{
$\ell_{1}$ Regularized Gradient Temporal-Difference Learning 
}
\author{\Name{Dominik Meyer} \Email{dominik.meyer@tum.de} \\ 
        \Name{Hao Shen} \Email{hao.shen@tum.de} \\
        \Name{Klaus Diepold} \Email{kldi@tum.de} \\
        \addr Institute for Data Processing, 
        Technische Universit\"at M\"unchen, Germany
}
\begin{document}

\maketitle
\thispagestyle{plain}

\begin{abstract}
	In this paper, we study the Temporal Difference (TD) learning with linear 
	value function approximation.
	It is well known that most TD learning algorithms are unstable with 
	linear function approximation and off-policy learning. 
	Recent development of \emph{Gradient TD} (GTD)
	algorithms has addressed this problem successfully.
	However, the success of GTD algorithms requires a set of
	well chosen features, which are not always available.
	When the number of features is huge, the GTD algorithms might
	face the problem of overfitting and being computationally expensive.
	To cope with this difficulty, regularization techniques, in 
	particular $\ell_{1}$ regularization, have attracted significant attentions
	in developing TD learning algorithms.
	The present work combines the GTD algorithms with $\ell_{1}$ regularization.
	%
	We propose a family of $\ell_{1}$ regularized GTD algorithms, which employ
	the well known soft thresholding operator.
	We investigate convergence properties of the proposed algorithms, and depict  
	their performance with several numerical experiments. \vspace{2mm}
\end{abstract}

\begin{keywords}
	Reinforcement Learning (RL), 
	linear function approximation, 
	Gradient Temporal-Difference (GTD) learning, 
	Iterative Soft Thresholding (IST). 
\end{keywords}

\section{Introduction}
One fundamental problem 
in Reinforcement Learning (RL) is to learn the long-term
expected reward, i.e. the value function, which can consequently 
be used for determining a good control policy, cf. \cite{sutt:book98}.
In the general setting with large or infinite state space, exact representation 
of the actual value function is often inhibitively computationally expensive 
or hardly possible. 
To overcome this difficulty, function approximation techniques 
are employed for estimating the value function from sampled trajectories.
The quality of the learned policy depends significantly on 
the chosen function approximation technique.

In this paper, we consider the technique of \emph{linear value function approximation}.
The value function is represented or approximated as a linear combination of a set 
of features, or basis functions. 
These features are generated from the sampled states via 
either some heuristic constructions, e.g. \cite{brad:ml96,kell:icml06},
or kernel-based approaches, e.g. \cite{tayl:icml09}.
A common approach generates firstly a vast number of features, which  
is often much larger than the number of available samples, and 
then chooses automatically relevant features to approximate the 
actual value function. 
Unfortunately, such approaches may fail completely due to overfitting.
To cope with this situation, regularization techniques are necessarily to be
employed.
Other than the simple $\ell_{2}$ regularization, which penalizes
the smoothness of the learned value function, e.g. \cite{fara:nips08}, 
in this work we focus on $\ell_{1}$ regularization.
The $\ell_{1}$ regularization often produces sparse solutions, thus can serve as 
a method of automatic feature selection for linear value function 
approximation.

This work focuses on the development of Temporal Difference (TD) learning algorithms, 
cf. \cite{brad:ml96}.
Recent active researches on applying $\ell_{1}$ regularization to TD learning have 
led to a various number of effective algorithms, e.g.~\cite{loth:adprl07,kolt:icml09,
john:nips10,geis:ewrl11,hoff:ewrl11}.
It is important to notice that $\ell_{1}$ minimization has been extensively 
studied in the areas of compressed sensing and image processing, and many 
efficient $\ell_{1}$ minimization algorithms have been developed, cf.
\cite{cand:ip07,zibu:spm10}.
Very recently, two advanced $\ell_{1}$ minimization algorithms have been adapted to the 
TD learning, i.e. the Dantzig selector based TD algorithm from \cite{geis:icml12} and the
orthogonal matching pursuit based TD algorithm developed in \cite{pain:icml12}.

On the other hand, most TD learning algorithms are known to be unstable with 
linear value function approximation and off-policy learning.
By observing the fact that most original forms of TD algorithms are not
true gradient descent methods, a new class of intrinsic gradient TD (GTD) learning 
algorithms with linear value function approximation are developed and proven 
to be stable, cf. \cite{sutt:nips08,sutt:icml09}. 
%
%
However, it is important to know that success of GTD algorithms
might be limited due to the fact that the GTD family requires a set of well chosen 
features.
In other words, the GTD algorithms are in potential danger of overfitting.
%
%
The key contribution of the present work is the development of a family of
$\ell_{1}$ regularized GTD algorithms, referred to as \emph{GTD-IST} algorithms.
Convergence properties of the proposed algorithms are investigated from
the perspective of stochastic optimization.

The paper is outlined as follows. In Section~\ref{sec:02}, we briefly introduce
a general setting of TD learning and provide some preliminaries of TD objective
functions. 
Section~\ref{sec:03} presents a framework of $\ell_{1}$ regularized GTD 
learning algorithms, and investigates their convergence properties.
In Section~\ref{sec:04}, several numerical experiments depict the practical performance
of the proposed algorithms, compared with several existing $\ell_{1}$ regularized 
TD algorithms.
Finally, a conclusion is drawn in Section~\ref{sec:05}.

\section{Notations and Preliminaries}
\label{sec:02}
In this work, we consider a RL process as a Markov Decision Process (MDP), defined as a tuple $(\mathcal{S},
\mathcal{A},P,r,\gamma)$, where
$\mathcal{S}$ is a set of possible states of the environment, $\mathcal{A}$ is a set 
of actions of the agent, $P \colon \mathcal{S} \times \mathcal{A} \times \mathcal{S}
\to [0, 1]$ the conditional transition probabilities $P(s,a,s')$ over state transitions
from state $s$ to state $s'$ given an action $a$, $r \colon \mathcal{S} \to \mathbb{R}$ 
is a reward function assigning immediate reward $r$ to 
a state $s$, and $\gamma \in [0, 1]$ is a discount factor.

\subsection{TD Learning with Linear Function Approximation}
The goal of a RL agent is to learn a mapping from states to actions,
i.e. a \emph{policy} $\pi \colon \mathcal{S} \to \mathcal{A}$, which maximizes the value 
function $V^{\pi} \colon \mathcal{S} \to \mathbb{R}$ of a state $s$ taking a policy 
$\pi$, defined as
\begin{equation}
\label{eq:01}
	V^{\pi}(s) := \mathbb{E}\left[ {\textstyle \sum\limits}_{t=0}^{\infty}\gamma^{t} 
	r(s_{t}) | s_{0}=s, \pi\right].
\end{equation}
It is well known that, for a given policy $\pi$, the value function $V^{\pi}$ fulfills 
the \emph{Bellman equation}, i.e.
\begin{equation}
\label{eq:02}
	V^{\pi}(s) = r(s) + \gamma \sum_{s'} P(s,\pi(s),s') V^{\pi}(s').
\end{equation}
The right hand side of \eqref{eq:02} is often referred to as the 
\emph{Bellman operator} for
policy $\pi$, denoted by $\mathcal{T}V^{\pi}(s)$.
In other words, the value function $V^{\pi}(s)$ is the fixed point of the Bellman operator
$\mathcal{T}V^{\pi}(s)$, i.e. $V^{\pi}(s)  = \mathcal{T}V^{\pi}(s)$.

%
When the state space is too large or infinite, exact representation of the value 
function is often practically unfeasible.
%
%
Function approximation is thus of great demand for estimating the actual value function.
A popular approach is to construct a set of features by the map
$\phi \colon \mathcal{S} \to \mathbb{R}^{k}$, which are called the \emph{features} or 
\emph{basis functions}, and then to approximate the value function by a linear 
function.
Concretely, for a given state $s$, the value function is approximated by 
\begin{equation}
\label{eq:03}
	V(s) \approx (\phi(s))^{\top} \theta =: V_{\theta},
\end{equation}
where $\theta \in \mathbb{R}^{k}$ is a parameter vector.
In the setting of TD learning, the parameter $\theta$ is updated at each time step $t$, 
i.e. for each state transition and the associated reward $(s_{t},r_{t},s_{t}')$.
Here, we consider  the simple one-step TD learning with linear function 
approximation, i.e. $\lambda = 0$ in the framework of TD($\lambda$) learning. 
The parameter $\theta$ is updated as follows
\begin{equation}
\label{eq:td}
	\theta_{t+1} = \theta_{t} + \alpha_{t} \delta_{t} \phi_{t},
\end{equation}
where $\alpha_{t} > 0$ is a sequence of step-size parameters, and $\delta_{t}$ is
the simple TD error
\begin{equation}
	\delta_{t} = r_{t} + \theta_{t}^{\top} \left(\gamma \phi_{t}' - 
	\phi_{t}\right).
\end{equation}
Note, that the TD error $\delta_{t}$ can be considered as a function of the
parameter $\theta_{t}$. By abuse of notation, in the rest of the paper 
we also denote $\delta_{\theta} = 
\delta(\theta) := r + \theta^{\top} \left(\gamma \phi' - \phi\right)$.

\subsection{Three Objective Functions for TD Learning}
In order to find an optimal parameter $\theta^{*}$ via an optimization process, 
one has to define an appropriate objective function, which accurately measures the 
correctness of the current value function approximation, i.e. how far the current 
approximation is away from the actual TD solution.
%
%
In this subsection, we recall three popular objective functions for TD learning.

Motivated by the fact that the value function is the fixed point of the Bellman 
operator for a given policy, correctness of an approximation $V_{\theta}$ can be simply measured by the TD error itself, i.e.
\begin{equation}
	J_{1} \colon \mathbb{R}^{k} \to \mathbb{R}, \qquad 
	J_{1}(\theta) := \tfrac{1}{2}
	\left\| V_{\theta} - \mathcal{T}V_{\theta} \right\|^2_{D}
	= \tfrac{1}{2}\left( \mathbb{E}[\delta_{\theta}] \right)^{2},
\end{equation}
where $D \in \mathbb{R}^{|\mathcal{S}| \times |\mathcal{S}|}$ is a diagonal 
matrix, whose components are some state distribution.
This cost function is often referred to as the \emph{Mean Squared Bellman Error} 
(MSBE).
Ideally, the minimum of the MSBE function admits a good value 
function approximation.
Unfortunately, it is well known that, in practice, the performance of an approximation
$V_{\theta}$ depends on the pre-selected feature space $\mathcal{H} := \left\{ \Phi 
\theta | \theta \in \mathbb{R}^{k} \right\}$, i.e. the span of the features 
$\Phi := \phi(\mathcal{S})$.
By introducing the projector as
\begin{equation}
	\Pi = \Phi \big( \Phi^{\top} D \Phi \big)^{-1} 
	\Phi^{\top} D,
\end{equation}
%
the so-called \emph{Mean Squared Projected Bellman Error} (MSPBE) is ofter preferred 
\begin{equation}
\label{eq:mspbe}
\begin{split}
	J_{2} \colon \mathbb{R}^{k} \to \mathbb{R}, \qquad 
	J_{2}(\theta) := &~\! \tfrac{1}{2}\left\| V_{\theta} - \Pi 
	\mathcal{T}V_{\theta} \right\|^2_{D} \\
	= &~\! \tfrac{1}{2}\mathbb{E}[\delta_{\theta } \phi]^{\top} 
	\mathbb{E}[\phi \phi^{\top}]^{-1} \mathbb{E}[\delta_{\theta} \phi].
\end{split}
\end{equation}
Minimizing the MSPBE function finds a fixed point of the projected Bellman operator
in the feature space $\mathcal{H}$, i.e. $V_{\theta}  = \Pi \mathcal{T}V_{\theta}$.

Finally, we present a less popular objective function for TD learning.  
Recall the TD parameter update as defined in \eqref{eq:td}.
The vector $\mathbb{E}[\delta_{\theta} \phi] \in \mathbb{R}^{k}$ in the second 
summand can be considered as an error for a given $\theta$.
It is expected to be equal to zero at the TD solution.
Hence, one can use the $\ell_{2}$ norm of this vector, defined as
\begin{equation}
\label{eq:neu}
	J_{3} \colon \mathbb{R}^{k} \to \mathbb{R}, \qquad 
	J_{3}(\theta) = \tfrac{1}{2} \mathbb{E}[\delta_{\theta}\phi]^{\top} 
	\mathbb{E}[\delta_{\theta}\phi],
\end{equation}
as an objective function for TD learning.
The function $J_{3}$ is referred to as the \emph{Norm of Expected TD Update} (NEU), 
which is used to derive the original GTD algorithm in \cite{sutt:nips08}.

\section{Stochastic Gradient Algorithms for $\ell_{1}$ Regularized TD Learning}
\label{sec:03}
%
%
In the first part of this section, we present a general framework of
gradient algorithms for minimizing the $\ell_{1}$
regularized TD objective functions.
The second subsection develops two $\ell_{1}$ regularized 
stochastic gradient TD algorithms in the online setting, 
and investigates their convergence properties from the perspective of 
stochastic optimization.

\subsection{$\ell_{1}$ Regularized TD Learning}
\label{sec:31}
Applying an $\ell_{1}$ regularizer to the parameter $\theta$ leads to the following 
objective function
\begin{equation}
\label{eq:obj} 
	F_{i}(\theta) := J_{i}(\theta) + \eta \|\theta\|_{1},
\end{equation}
where $i \in \{1,2,3\}$ and $\|\theta\|_{1} = \sum_{i} |\theta_{i}|$ denotes the $\ell_{1}$ norm of a vector $\theta = [\theta_{1}, \ldots, \theta_{k}]^{\top} \in
\mathbb{R}^{k}$.
Here, the scalar $\eta > 0$ weighs the regularization term $\| \theta \|_{1}$, and 
balances the sparsity of $\theta$ against the TD objective function $J_{i}$.
%
%
The \emph{iterative soft thresholding} (IST) algorithm is nowadays one classic 
algorithm for minimizing the cost function \eqref{eq:obj}.
It can be interpreted as an extension of the classical gradient algorithm.
Due to its high popularity, we skip the derivation of the IST algorithm, and refer to \cite{zibu:spm10} and the references therein for further reading.

%
Given $x \in \mathbb{R}^{m}$ and $\nu > 0$, the \emph{soft thresholding operator} 
applied to $x$ is defined as
\begin{equation}
\begin{split}
	\Psi_{\nu}(x) :=~\! & \operatorname{sgn}(x)
	\astrosun \operatorname{max}\{|x| - \nu, 0\} \\
	=~\! & \left\{\!\!
	\begin{array}{ll}
		x - \operatorname{sgn}(x)\nu,\qquad & \text{if}~|x|>\nu, \\
		0, & \text{otherwise},
	\end{array}
	\right.
\end{split}
\end{equation}
where $\operatorname{sgn}(\cdot)$ and
$\max(\cdot)$ are entry-wise,
and $\astrosun$ is the entry-wise multiplication. 
Then, minimization of the objective function \eqref{eq:obj} can be achieved
via applying the soft thresholding operator iteratively.
Straightforwardly, we define the IST based TD update as follows
\begin{equation}
\label{ist_td}
	\theta_{t+1} = \Psi_{\alpha_t \eta} \left( \theta_t - \alpha_t 
	\nabla J_{i}(\theta_{t}) \right),
\end{equation}
where $\alpha_{t} > 0$, and 
$\nabla J_{i}(\theta_{t})$ denotes the gradient update of $J_{i}(\theta_{t})$.
Specifically, the gradient updates of the three objective functions are given 
as 
\begin{equation}
\left\{\!\!
	\begin{array}{l}
		\nabla J_{1}(\theta_{t}) = \mathbb{E}\big[ \delta_{t} \big] 
		\mathbb{E}\big[ (\gamma \phi'_{t} - \phi_{t}) \big], \\
		\nabla J_{2}(\theta_{t}) = \mathbb{E}\big[ (\gamma \phi'_{t} - \phi_{t})
		\phi_{t}^{\top} \big] \left( \mathbb{E}\big[ \phi_{t} \phi_{t}^{\top} \big]
		\right)^{-1} \mathbb{E}\big[ \delta_{t} \phi_{t} \big], \\[0.6mm]
		\nabla J_{3}(\theta_{t}) = \mathbb{E}\big[ (\gamma \phi'_{t} - \phi_{t})
		\phi_{t}^{\top} \big] \mathbb{E}\big[ \delta_{t} \phi_{t} \big].
	\end{array}
\right.
\end{equation}
We refer to this family of algorithms as TD-IST algorithms.
Note that IST has been employed in developing fixed point 
TD algorithms in \cite{pain:tr12}, whereas in this work we focus on developing 
intrinsic gradient TD algorithm.

\subsection{Stochastic GTD-IST Algorithms}
The TD-IST algorithms presented in the previous subsection are only 
applicable in the batch setting. In some real applications, it is certainly
favorable to have them working online. 
%
%
Stochastic gradient descent algorithms can be developed straightforwardly to 
minimize the $\ell_{1}$ regularized TD objective functions.
%

%
%

Now let us consider the online setting, i.e. given a sequence of data samples 
$\phi_{1}, \phi_{2}, \ldots$.
%
%
%
In the form of stochastic gradient descent, we propose a general form of 
parameter update as 
\begin{equation}
	\quad \theta_{t+1} = \Psi_{\alpha_t \eta} \left( \theta_t - \alpha_t 
	\widetilde{\nabla} J_{i}(\theta_{t}) \right),
\end{equation}
where $\widetilde{\nabla} J_{i}(\theta_{t})$ denotes the stochastic gradient updates of 
$J_{i}(\theta_{t})$, or their appropriate stochastic approximations, cf. \cite{sutt:nips08,sutt:icml09}.
To investigate convergence properties of the proposed algorithms requires 
results from \cite{duch:jmlr09}, which develops a general framework for analyzing
empirical loss minimization with regularizations.
We adapt the result in corollary~10 from \cite{duch:jmlr09} to our current 
setting as follows.

\begin{theorem}
\label{thm:01}
	Let the function $J \colon \mathbb{R}^{k} \to \mathbb{R}$ be smooth and 
	strictly convex and $\theta^{*} \in \mathbb{R}^{k}$ be the global minimum
	of the function $F(\theta) := J(\theta) + \eta \| \theta \|_{1}$ with
	$\eta > 0$.
	If the following three conditions hold: 
	(1) $\theta^{*}$ fulfills $\| \theta_{t} - \theta^{*} \|_{2} \le d$ for
	some constant $d > 0$;
	(2) $\| \nabla J (\theta_{t}) \|_{2} \le g$ for
	some constant $g > 0$; and (3) a stochastic estimate of the gradient
	$\widetilde{\nabla} J(\theta_{t})$ fulfills 
	$\mathbb{E}[\widetilde{\nabla} J(\theta_{t})] =
	\nabla J(\theta_{t})$,
	then IST based stochastic algorithms converge with 
	probability one to $\theta^{*}$.
\end{theorem}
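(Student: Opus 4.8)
The plan is to derive the statement as a direct consequence of the convergence theory for regularized dual averaging / forward-backward splitting algorithms developed in \cite{duch:jmlr09}, by verifying that the hypotheses (1)--(3) are exactly the conditions required by the relevant corollary there. First I would set up the stochastic approximation framework: write the update as $\theta_{t+1} = \Psi_{\alpha_t \eta}(\theta_t - \alpha_t \widetilde{\nabla} J(\theta_t))$ and decompose the stochastic gradient as $\widetilde{\nabla} J(\theta_t) = \nabla J(\theta_t) + M_t$, where by condition~(3) the noise term $M_t$ forms a martingale difference sequence with respect to the natural filtration $\mathcal{F}_t$ generated by the samples up to time $t$, i.e. $\mathbb{E}[M_t \mid \mathcal{F}_t] = 0$. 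Condition~(2) together with condition~(1) controls the magnitude of the deterministic part and keeps the iterates in a bounded region, which is what makes the noise variance (conditional on $\mathcal{F}_t$) uniformly bounded, so that the standard Robbins--Monro step-size requirements $\sum_t \alpha_t = \infty$ and $\sum_t \alpha_t^2 < \infty$ are enough to control the cumulative effect of $M_t$.

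Next I would invoke strict convexity and smoothness of $J$: these guarantee that $F = J + \eta\|\cdot\|_1$ has a unique global minimizer $\theta^*$, and that $F$ acts as a Lyapunov function for the mean dynamics. The key inequality to establish is a supermartingale-type bound on $\|\theta_{t+1} - \theta^*\|_2^2$: using nonexpansiveness of the soft-thresholding (proximal) operator $\Psi_{\alpha_t\eta}$ and the subgradient characterization of $\theta^*$ as the fixed point of the forward-backward map, one obtains $\mathbb{E}[\|\theta_{t+1}-\theta^*\|_2^2 \mid \mathcal{F}_t] \le \|\theta_t - \theta^*\|_2^2 - 2\alpha_t\big(F(\theta_t) - F(\theta^*)\big) + \alpha_t^2 C$ for a constant $C$ depending on $d$ and $g$. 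Summing this bound and applying the Robbins--Siegmund almost-supermartingale convergence lemma gives that $\|\theta_t - \theta^*\|_2^2$ converges almost surely and that $\sum_t \alpha_t (F(\theta_t) - F(\theta^*)) < \infty$ almost surely; since $\sum_t \alpha_t = \infty$ and $F(\theta_t) \ge F(\theta^*)$, this forces $\liminf_t F(\theta_t) = F(\theta^*)$, and strict convexity then upgrades this to $\theta_t \to \theta^*$ with probability one. In this paper's setting the concrete realizations are the stochastic estimates $\widetilde{\nabla} J_i$ built from single-sample (or two-timescale, for $i=2,3$) approximations of the expectations appearing in $\nabla J_i$, exactly as in \cite{sutt:nips08,sutt:icml09}; one checks that the two-timescale auxiliary iterates track $\mathbb{E}[\phi\phi^\top]^{-1}\mathbb{E}[\delta_\theta\phi]$ so that condition~(3) holds in the limiting sense required.

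The main obstacle I expect is precisely verifying condition~(3) — unbiasedness of the stochastic gradient — for the MSPBE and NEU objectives. For $J_2$ and $J_3$ the gradient contains a product of two expectations (and, for $J_2$, an inverse of an expectation), so a naive single-sample product is biased; the standard remedy is the two-timescale trick of maintaining an auxiliary weight vector $w_t$ updated on a faster timescale to estimate $\mathbb{E}[\phi\phi^\top]^{-1}\mathbb{E}[\delta_\theta\phi]$ (or $\mathbb{E}[\delta_\theta\phi]$), so that the effective $\theta$-update is asymptotically unbiased. Handling this rigorously requires the ODE method or a two-timescale stochastic approximation argument, which is somewhat more delicate than the single-timescale Robbins--Siegmund argument sketched above; the cleanest route is to cite the relevant two-timescale convergence results and reduce to the hypotheses of the corollary in \cite{duch:jmlr09}. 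A secondary technical point is justifying condition~(1), i.e. boundedness of the iterates: in practice one either assumes it, projects onto a large ball, or derives it a~posteriori from the stability of the GTD recursion, and I would follow the last of these, noting that the $\ell_1$ term only helps, since $\Psi$ is nonexpansive and pulls iterates toward the origin.
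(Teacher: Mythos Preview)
Your opening plan---reduce the theorem to the convergence framework of \cite{duch:jmlr09} and identify hypotheses (1)--(3) with the conditions of the relevant corollary there---is precisely what the paper does, and in fact the paper does nothing more: Theorem~\ref{thm:01} is stated without proof, introduced only by the sentence ``We adapt the result in corollary~10 from \cite{duch:jmlr09} to our current setting.'' So at the level of what the paper actually contains, you have already matched it; the Robbins--Siegmund supermartingale sketch you add is more than the paper provides, and is a reasonable reconstruction of the mechanics behind the cited result.

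Where your proposal drifts is in its second half. The discussion of verifying condition~(3) for the MSPBE and NEU objectives, the two-timescale auxiliary iterate $w_t$, and the a~posteriori justification of boundedness do not belong to the proof of Theorem~\ref{thm:01} at all: in that theorem, conditions (1)--(3) are \emph{hypotheses}, not things to establish. The paper separates this cleanly: Theorem~\ref{thm:01} is the abstract stochastic-IST convergence statement imported from \cite{duch:jmlr09}, while the verification of unbiasedness and strict convexity for the specific TD objectives $J_2$ and $J_3$ is the content of Corollaries~\ref{cor:01} and~\ref{cor:02}. Your ``main obstacle'' is therefore real, but it is an obstacle for those corollaries, not for the theorem you were asked to prove. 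If you keep the first paragraph of your proposal and drop (or relocate) the objective-specific material, you have exactly the paper's argument, only spelled out more fully.
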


%
Let us look at the $\ell_{1}$ regularized NEU function $F_{3}$ first. 
Recall the approximate stochastic gradient update, developed in \cite{sutt:nips08}, as
\begin{equation}
\label{eq:gtd_ist}
	\widetilde{\nabla} J_{3}(\theta_{t}) = (\phi_{t}^{\top} u_{t}) (\gamma \phi_{t}'
	- \phi_{t}),
\end{equation}
with 
\begin{equation}
	u_{t+1} = u_{t} + \beta_{t} (\delta_{t} \phi_{t} - u_{t}),
\end{equation}
where $\beta_{t} > 0$ is a step size parameter.
We refer to the corresponding algorithm as the \emph{GTD-IST} algorithm. 
Convergence properties of the GTD-IST algorithm are characterized in 
the following corollary.
\begin{corollary}
\label{cor:01}
	If $(\phi_{t},r_{t},\phi'_{t})$ is an i.i.d sequence with uniformly
	bounded second moments, and the matrix 
	$\mathbb{E}[\phi (\gamma \phi' - \phi)^{\top}] \in \mathbb{R}^{k \times k}$ 
	is invertible, then the GTD-IST algorithm, whose update is specified in
	\eqref{eq:gtd_ist}, converges with probability one to the TD solution. 
\end{corollary}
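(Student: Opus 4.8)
The plan is to verify that the three hypotheses of Theorem~\ref{thm:01} hold for the choice $J = J_{3}$ (the NEU objective) with the stochastic estimate $\widetilde{\nabla} J_{3}(\theta_{t})$ given in \eqref{eq:gtd_ist}, and that the unique global minimum $\theta^{*}$ of $F_{3}$ coincides with the TD solution. First I would argue that, under the invertibility assumption on $A := \mathbb{E}[\phi(\gamma\phi' - \phi)^{\top}]$, the function $J_{3}(\theta) = \tfrac{1}{2}\|A^{\top}\theta + b\|_{2}^{2}$ (with $b := \mathbb{E}[\delta_{0}\phi]|_{\theta = 0} = \mathbb{E}[r\phi]$, so that $\mathbb{E}[\delta_{\theta}\phi] = A^{\top}\theta + b$) is a strictly convex quadratic, hence smooth and strictly convex as required; its Hessian $A A^{\top}$ is positive definite precisely because $A$ is invertible. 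Consequently $F_{3}$ has a unique global minimizer $\theta^{*}$, and the first-order optimality/fixed-point characterization shows $\theta^{*}$ is the (unique) TD solution in the feature space, i.e. $\mathbb{E}[\delta_{\theta^{*}}\phi] = 0$ shifted by the subgradient of $\eta\|\cdot\|_{1}$ — I would be careful here to state this as the regularized TD solution, matching the paper's usage.

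Next I would check the quantitative conditions. Condition (3), unbiasedness, is the main structural point: although $\widetilde{\nabla} J_{3}(\theta_{t}) = (\phi_{t}^{\top}u_{t})(\gamma\phi_{t}' - \phi_{t})$ is not literally an unbiased one-sample estimate of $\nabla J_{3}(\theta_{t}) = \mathbb{E}[(\gamma\phi' - \phi)\phi^{\top}]\,\mathbb{E}[\delta_{\theta}\phi]$, the auxiliary iterate $u_{t}$ tracks $\mathbb{E}[\delta_{\theta_{t}}\phi]$ on the fast timescale, so in the two-timescale limit (the standard GTD argument of \cite{sutt:nips08}) the update behaves like the true gradient in expectation; I would invoke the i.i.d.\ assumption with bounded second moments to justify that $u_{t} \to \mathbb{E}[\delta_{\theta}\phi]$ and that the resulting averaged update equals $\nabla J_{3}$. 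Conditions (1) and (2) — boundedness of the iterates $\|\theta_{t} - \theta^{*}\|_{2} \le d$ and of the gradient $\|\nabla J_{3}(\theta_{t})\|_{2} \le g$ — follow once boundedness of the iterates is established: the bounded second moments give a uniform bound on $\|A\|$ and $\|b\|$, so $\|\nabla J_{3}(\theta)\| \le \|A\|\,\|A^{\top}\theta + b\|$ is controlled on any bounded set, and the soft-thresholding shrinkage together with the stability of the underlying GTD ODE keeps $\theta_{t}$ in a bounded region almost surely.

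The hard part will be condition (1): a priori boundedness of the iterates $\theta_{t}$. The gradient bound (2) is only meaningful on a bounded set, so one genuinely needs an independent argument that $\{\theta_{t}\}$ stays bounded with probability one — typically via a Lyapunov/ODE stability argument showing that the mean vector field $-\nabla J_{3}$, being that of a strictly convex quadratic, is globally asymptotically stable and points inward on large spheres, combined with the martingale-difference noise having bounded variance (from the uniformly bounded second moments). I would either cite the boundedness results accompanying the convergence analysis in \cite{sutt:nips08,sutt:icml09} and \cite{duch:jmlr09}, or sketch the standard Borkar–Meyn-type argument; the remaining steps (strict convexity, unbiasedness in the two-timescale limit, identification of $\theta^{*}$ with the TD solution) are then routine verifications feeding into Theorem~\ref{thm:01}.
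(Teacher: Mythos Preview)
Your proposal follows essentially the same route as the paper: establish strict convexity of $J_{3}$ from invertibility of $A = \mathbb{E}[\phi(\gamma\phi'-\phi)^{\top}]$, verify the unbiasedness condition~(3) via the quasi-stationary (two-timescale) behaviour of $u_{t}$ from \cite{sutt:nips08}, and invoke Theorem~\ref{thm:01}. You are in fact more careful than the paper on two points it glosses over: the paper never addresses condition~(1) (a~priori boundedness of $\theta_{t}$), simply asserting that bounded second moments give condition~(2); and it states flatly that ``the TD solution is then the global minimum of $F_{3}$,'' whereas you rightly flag that the minimizer of the $\ell_{1}$-regularized objective is only the regularized TD fixed point.
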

\begin{proof}
	Recall the TD error $\delta_{\theta} = r + \theta^{\top} (\gamma \phi' - \phi)$.
	The $\ell_{1}$ regularized NEU cost function $F_{3}$ can be written as
	\begin{equation}
	\begin{split}
		F_{3}(\theta) =~\! & \mathbb{E}[\delta_{\theta } \phi]^{\top} 
		\mathbb{E}[\delta_{\theta} \phi] + \eta \| \theta \|_{1} \\
		=~\! & \mathbb{E}\big[r \phi + \theta^{\top} (\gamma \phi' - \phi) \phi 
		\big]^{\top}
		\mathbb{E}\big[r \phi + \theta^{\top} (\gamma \phi' - \phi) \phi \big]
		+ \eta \| \theta \|_{1}.
	\end{split}
	\end{equation}
	It is easily seen that the regularized function $F_{3}$ is strictly convex 
	if the matrix 
	$\mathbb{E}[\phi (\gamma \phi' - \phi)^{\top}]$ is invertible.
	The TD solution is then the global minimum of $F_{3}$.
	The condition of $(\phi_{t},r_{t},\phi'_{t})$ being an i.i.d sequence 
	with uniformly bounded second moments ensures that 
	$\| \nabla J_{i} (\theta_{t}) \|_{2} \le g$ 
	holds true for some constant $g > 0$.
	Finally, applying the fact that the stochastic approximation $u_{t}$ is a 
	quasi-stationary estimate of the term $\mathbb{E}[\delta \phi]$, cf.
	\cite{sutt:nips08}, we have
	\begin{equation}
	\begin{split}
		\mathbb{E}\big[ \widetilde{\nabla} J_{3} (\theta_{t}) \big] =&~\! 
		\mathbb{E}\big[ (\gamma \phi'_{t} - \phi_{t}) \phi_{t}^{\top} u_{t} \big] \\
		=&~\! \mathbb{E}\big[ (\gamma \phi'_{t} - \phi_{t}) \phi_{t}^{\top} \big]
		\mathbb{E}\big[ \delta_{t} \phi_{t} \big] \\
		=&~\! \nabla J_{3}(\theta_{t}).
	\end{split}
	\end{equation}
	Then the result follows from Theorem~\ref{thm:01}.
	\vspace{-3mm}
\end{proof}

%
In order to minimize the MSPBE function $J_{2}$, two efficient GTD algorithms are 
developed in \cite{sutt:icml09}. Their approximate stochastic updates are
defined as
\begin{subequations}
\label{eq:gtd_ist2}
\begin{align}
	\widetilde{\nabla} J_{2}^{(1)}(\theta_{t}) = (\phi_{t}^{\top } w_{t}) 
	(\gamma \phi_{t}' - \phi_{t}), \label{eq:gtd2_ist} \\ 
	\widetilde{\nabla} J_{2}^{(2)}(\theta_{t}) =  
	\gamma (\phi_{t}^{\top} w_{t}) \phi_{t}'  
	- \delta_{t} \phi_{t}, \label{eq:tdc_ist}
\end{align}
\end{subequations}
where
\begin{equation}
	w_{t+1} = w_{t} + \beta_{t} (\delta_{t} - \phi_{t}^{\top} w_{t}) \phi_{t}.
\end{equation}
We refer to the corresponding $\ell_{1}$ regularized GTD algorithms, which employ the 
updates \eqref{eq:gtd2_ist} and \eqref{eq:tdc_ist}, 
as GTD2-IST and TDC-IST algorithms, respectively.
With no surprises, they share similar convergence properties as the 
GTD-IST algorithm.

\begin{corollary}
\label{cor:02}
	If $(\phi_{t},r_{t},\phi'_{t})$ is an i.i.d sequence with uniformly
	bounded second moments, and both $\mathbb{E}[\phi (\phi - \gamma \phi')^{\top}]$ and 
	$\mathbb{E}[\phi \phi^{\top}]$ are invertible, then both the GTD2-IST and the 
	TDC-IST algorithms, whose updates are specified in \eqref{eq:gtd_ist2}, 
	converge with probability one to the TD solution.
\end{corollary}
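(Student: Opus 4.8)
The plan is to follow the proof of Corollary~\ref{cor:01} line by line, replacing the NEU objective $J_{3}$ by the MSPBE objective $J_{2}$. First I would observe that $\mathbb{E}[\delta_{\theta}\phi] = \mathbb{E}[r\phi] + \mathbb{E}\big[\phi(\gamma\phi'-\phi)^{\top}\big]\theta$ is an affine function of $\theta$, so that the $\ell_{1}$ regularized MSPBE $F_{2}(\theta) = \tfrac{1}{2}\mathbb{E}[\delta_{\theta}\phi]^{\top}\mathbb{E}[\phi\phi^{\top}]^{-1}\mathbb{E}[\delta_{\theta}\phi] + \eta\|\theta\|_{1}$ is a regularized convex quadratic in $\theta$. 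Since $\mathbb{E}[\phi\phi^{\top}]$ is a second-moment matrix it is positive semidefinite, and invertibility upgrades it to positive definite, hence $\mathbb{E}[\phi\phi^{\top}]^{-1}$ is positive definite; together with invertibility — and therefore injectivity — of $\mathbb{E}[\phi(\phi-\gamma\phi')^{\top}]$, the Hessian of $J_{2}$ is positive definite, so $J_{2}$ is smooth and strictly convex and $F_{2}$ has a unique global minimizer $\theta^{*}$, namely the $\ell_{1}$ regularized TD solution. Exactly as in the proof of Corollary~\ref{cor:01}, the assumption that $(\phi_{t},r_{t},\phi'_{t})$ is i.i.d.\ with uniformly bounded second moments supplies the remaining structural hypotheses of Theorem~\ref{thm:01}, namely $\|\theta_{t}-\theta^{*}\|_{2}\le d$ and $\|\nabla J_{2}(\theta_{t})\|_{2}\le g$.

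The substantive step is to verify that the two stochastic updates in \eqref{eq:gtd_ist2} are unbiased estimates of $\nabla J_{2}(\theta_{t})$. As in \cite{sutt:icml09}, the auxiliary iterate $w_{t}$, run on the faster step-size schedule $\beta_{t}$, is a quasi-stationary estimate of the least-squares weight $w^{*} := \mathbb{E}[\phi\phi^{\top}]^{-1}\mathbb{E}[\delta_{\theta}\phi]$ — this is precisely where invertibility of $\mathbb{E}[\phi\phi^{\top}]$ enters. Granting this, for GTD2 I would take the expectation of \eqref{eq:gtd2_ist}, using $w_{t}\approx w^{*}$ and independence of the time-$t$ sample, obtaining $\mathbb{E}\big[\widetilde{\nabla}J_{2}^{(1)}(\theta_{t})\big] = \mathbb{E}\big[(\gamma\phi'_{t}-\phi_{t})\phi_{t}^{\top}\big]w^{*} = \nabla J_{2}(\theta_{t})$. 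For TDC the computation is slightly less direct: the expectation of \eqref{eq:tdc_ist} is $\gamma\,\mathbb{E}[\phi'\phi^{\top}]w^{*} - \mathbb{E}[\delta_{\theta}\phi]$, and I would then invoke the defining identity $\mathbb{E}[\phi\phi^{\top}]w^{*} = \mathbb{E}[\delta_{\theta}\phi]$ to replace the last term by $\mathbb{E}[\phi\phi^{\top}]w^{*}$, which collapses the expression to $\mathbb{E}\big[(\gamma\phi'-\phi)\phi^{\top}\big]w^{*} = \nabla J_{2}(\theta_{t})$. Thus both updates are unbiased for $\nabla J_{2}$.

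With strict convexity of $J_{2}$, the two boundedness conditions, and unbiasedness of $\widetilde{\nabla}J_{2}^{(1)}$ and $\widetilde{\nabla}J_{2}^{(2)}$ in hand, the claim follows by applying Theorem~\ref{thm:01} separately to GTD2-IST and TDC-IST. I expect the main obstacle to be the quasi-stationarity justification for $w_{t}$: condition (3) of Theorem~\ref{thm:01} asks for an exactly unbiased gradient oracle, whereas GTD2 and TDC are genuinely two-timescale schemes, so one must argue that, on the slow timescale, the $w$-recursion may be treated as having equilibrated — either by citing the two-timescale convergence analysis of \cite{sutt:icml09} for the inner recursion and then invoking it as an (asymptotically) unbiased-gradient hypothesis, or by a perturbation argument controlling the vanishing bias $\mathbb{E}[\widetilde{\nabla}J_{2}^{(i)}(\theta_{t})] - \nabla J_{2}(\theta_{t})$. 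The only genuinely new, though routine, algebra relative to Corollary~\ref{cor:01} is the rewriting of the TDC expectation via $\mathbb{E}[\phi\phi^{\top}]w^{*} = \mathbb{E}[\delta_{\theta}\phi]$.
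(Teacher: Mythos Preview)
Your proposal is correct and mirrors the paper's proof: establish strict convexity of $F_{2}$ from the two invertibility hypotheses, invoke the quasi-stationarity of $w_{t}$ from \cite{sutt:icml09} to obtain unbiasedness of both stochastic gradients, and conclude via Theorem~\ref{thm:01} exactly as in Corollary~\ref{cor:01}. Your version is in fact more careful than the paper's --- you correctly identify $w^{*}$ as the vector $\mathbb{E}[\phi\phi^{\top}]^{-1}\mathbb{E}[\delta_{\theta}\phi]$ (the paper writes a matrix-valued expression there) and you spell out the TDC algebra via $\mathbb{E}[\phi\phi^{\top}]w^{*} = \mathbb{E}[\delta_{\theta}\phi]$, which the paper simply asserts --- and the concern you flag about quasi-stationarity versus exact unbiasedness is legitimate and equally unaddressed in the paper.
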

\begin{proof}
	The $\ell_{1}$ regularized MSPBE cost function $F_{2}$ can be written as
	\begin{equation}
	\begin{split}
		F_{2}(\theta) =~\! & \mathbb{E}[\delta_{\theta } \phi]^{\top} 
		\mathbb{E}[\phi \phi^{\top}]^{-1} \mathbb{E}[\delta_{\theta} \phi] + 
		\eta \| \theta \|_{1} \\
		=~\! & \mathbb{E}\big[r \phi + \theta^{\top} (\gamma \phi' - \phi) \phi \big]^{\top}
		\mathbb{E}[\phi \phi^{\top}]^{-1}
		\mathbb{E}\big[r \phi + \theta^{\top} (\gamma \phi' - \phi) \phi \big] + 
		\eta \| \theta \|_{1}.
	\end{split}
	\end{equation}
	The function $F_{2}$ is strictly convex 
	if the matrix 
	\begin{equation}
		\mathbb{E}\big[\phi (\gamma \phi' - \phi)^{\top}\big]
		\mathbb{E}[\phi \phi^{\top}]^{-1}
		\mathbb{E}\big[(\gamma \phi' - \phi) \phi^{\top}\big]
	\end{equation}
	is positive definite, i.e. both $\mathbb{E}[\phi (\phi - \gamma 
	\phi')^{\top}]$ and $\mathbb{E}[\phi \phi^{\top}]$ are invertible.
	By the fact that the stochastic approximation $w_{t}$ is a 
	quasi-stationary estimate of the term $\mathbb{E}[\phi \phi^{\top}]^{-1}
	\mathbb{E}\big[(\gamma \phi' - \phi) \phi^{\top}\big]$, cf.
	\cite{sutt:icml09}, we get 
	\begin{equation}
		\mathbb{E}\big[ \widetilde{\nabla} J_{2}^{(1)} (\theta_{t}) \big] 
		= \mathbb{E}\big[ \widetilde{\nabla} J_{2}^{(2)} (\theta_{t}) \big]
		= \nabla J_{2}(\theta_{t}).
	\end{equation}
	Then, the result follows straightforwardly from the same arguments as 
	in Corollary~\ref{cor:01}.
	\vspace{-5mm}
\end{proof}


\begin{figure}[t!]
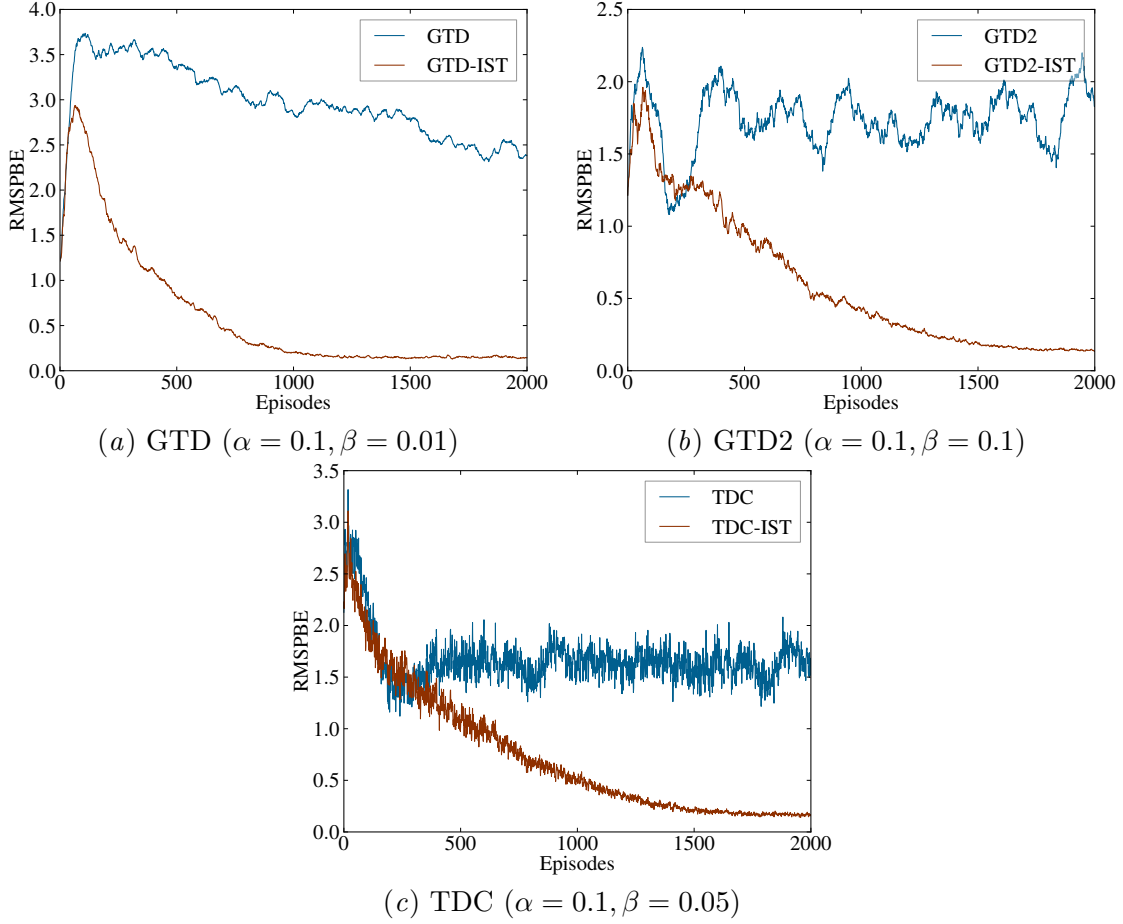

\begin{center}
	\subfigure[GTD ($\alpha = 0.1, \beta = 0.01$)]{
	\includegraphics[width=0.47\textwidth]{img/random_walk_RMSPBE_GTD_EWRL-crop.pdf}
	\label{fig:rmspbe_gtd}
	}
	\subfigure[GTD2 ($\alpha = 0.1, \beta = 0.1$)]{
	\includegraphics[width=0.47\textwidth]{img/random_walk_RMSPBE_GTD2_EWRL-crop.pdf}
	\label{fig:rmspbe_gtd2}
	}
	\subfigure[TDC ($\alpha = 0.1, \beta = 0.05$)]{
	\includegraphics[width=0.47\textwidth]{img/random_walk_RMSPBE_TDC_EWRL-crop.pdf}
	\label{fig:rmspbe_tdc}
	}
    \caption{A comparison of IST based GTD learning family ($\eta = 0.001$).}
    \label{fig:rmspbe}
    \vspace*{-7mm}
\end{center}
\end{figure}   

\section{Numerical Experiments}
\label{sec:04}
In this section, we investigate the performance of our 
proposed $\ell_{1}$ regularized GTD algorithms, compared with two existing 
$\ell_{1}$ regularized TD algorithms, in both the on-policy and off-policy settings. 

\begin{figure}[t!]
\begin{center}
    \includegraphics[width=0.47\textwidth]{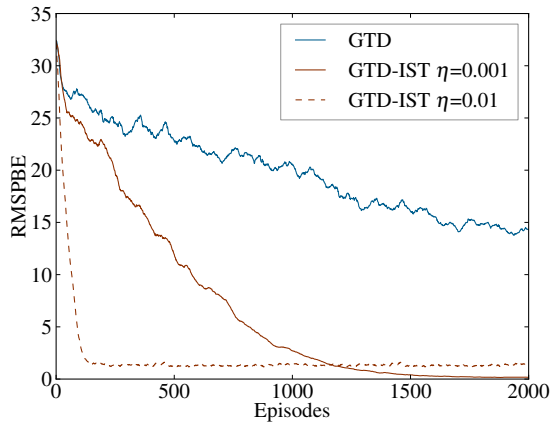}
    \caption{GTD with unfavorable initializations ($\alpha = 0.1, \beta = 0.01$).}
    \label{fig:badinit}
\end{center}
\end{figure}

\subsection{Experiment One: On-Policy Learning}
In this experiment, we apply our proposed algorithms to a random walk 
problem in the chain environment consisting of seven states.
There exists only one action and the transition probability 
of going right or left is equal. 
A reward of one is only assigned in the rightmost state, which is the terminal state,
whereas the rewards are zero everywhere else.
The features consist of a binary encoding of the states and 
ten additional ``noisy'' features, which are simply Gaussian noise.
In this setting, we run three different experiments. 
%

\subsubsection{Regularized vs. Un-regularized}
This experiment compares the performance of the proposed $\ell_{1}$ 
regularized GTD algorithms with their un-regularized counterparts.
Figure $\ref{fig:rmspbe}$ shows the learning curves of three GTD 
learning algorithms, namely, GTD, GTD2, and TDC, together with their
regularized versions.
It is evident that IST based GTD algorithms outperform all their original 
un-regularized versions respectively.
The experimental results demonstrate the effectiveness of IST based 
GTD learning algorithms.

\subsubsection{Unfavorable Initializations}
The second experiment investigates the recovery behavior 
and convergence speed of our proposed algorithms with unfavorable initializations. 
Here, we only consider the simple GTD-IST algorithm. 
The parameter vector $\theta$ is initialized to have ones for 
all the noisy features and zeros for all the ``good'' features. 
In other words, our experiment starts with the initialization of selecting all
the ``bad'' features. 
The results in Figure~\ref{fig:badinit} show that the $\ell_{1}$
regularized GTD algorithms, i.e. the GTD-IST algorithm with different parameter value
$\eta$, converge faster to the correct selection of features 
than the original GTD algorithm.

\subsubsection{GTD-IST Algorithms vs. Others}
In the third experiment, we compare the GTD-IST algorithms with  
the L1TD algorithm from \cite{pain:tr12} and the LARS-TD algorithm from
\cite{kolt:icml09}. 
Results in both Figure~\ref{fig:noise_without} and \ref{fig:noise_with}
imply that, with or without noise, 
all three GTD-IST algorithms outperforms the L1TD algorithm consistently.
A closer look at the result in the zoomed-in window in 
Figure~\ref{fig:noise_with_zoom} shows that the LARS-TD algorithm performs
the best with the presence of noise.
This might be due to the fact that the LARS-TD algorithm updates, after every 
20 episodes, using all the samples available.
Nevertheless, without any surprise, a timing experiment shows in Table~1 that 
the LARS-TD algorithm performs much slower 
than the other online algorithms.
\begin{figure}[t!]
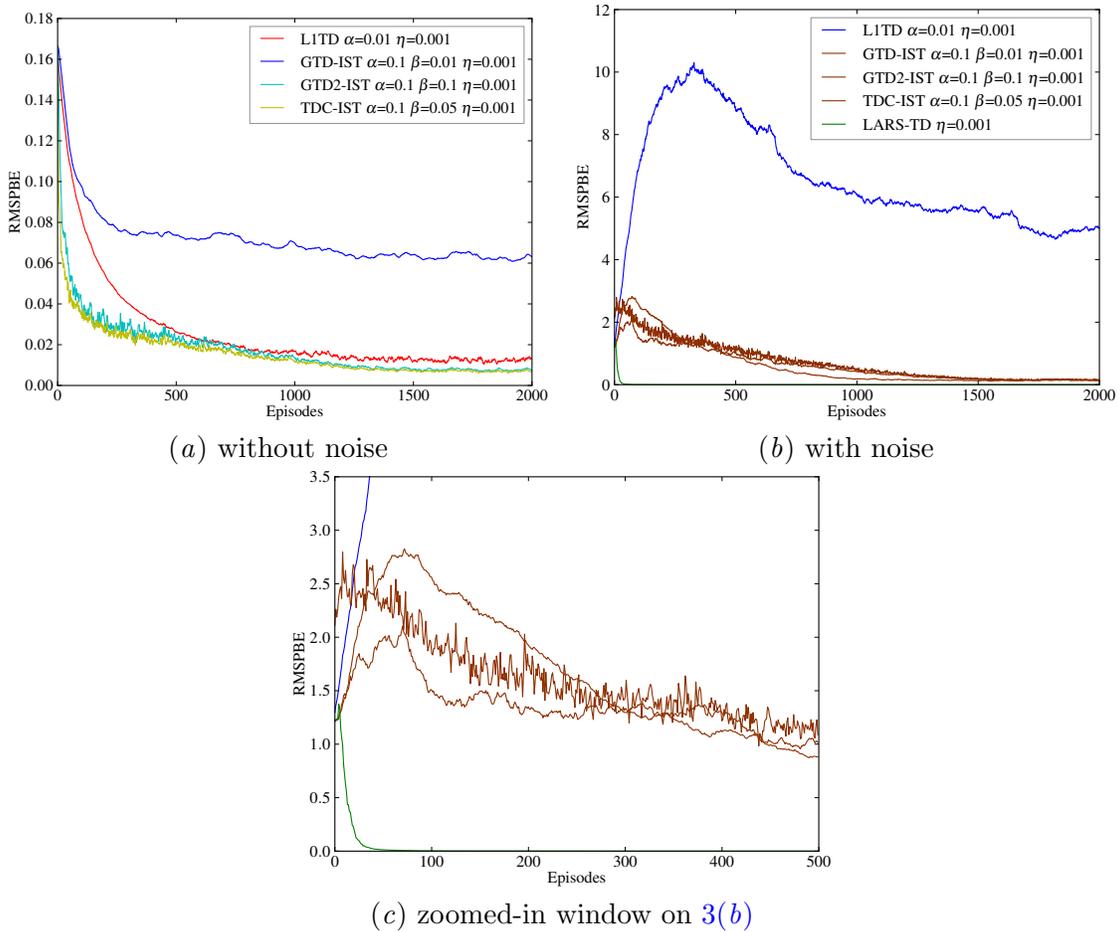

\begin{center}
	\subfigure[without noise]{
	\includegraphics[width=0.47\textwidth]{img/random_walk_RMSPBE_L1TD_nonoise-crop.pdf}
	\label{fig:noise_without}
	}
	\subfigure[with noise]{
	\includegraphics[width=0.47\textwidth]{img/random_walk_RMSPBE_L1TD_LARSTD_GTD-crop.pdf}
	\label{fig:noise_with}
	}
	\subfigure[zoomed-in window on \ref{fig:noise_with}]{
	\includegraphics[width=0.47\textwidth]{img/random_walk_RMSPBE_L1TD_LARSTD_GTD_zoom-crop.pdf}
	\label{fig:noise_with_zoom}
	}
    \caption{Performance of LARS-TD, L1TD, and the GTD-IST algorithms.}
    \label{fig:noise}
    \vspace*{-7mm}
\end{center}
\end{figure}

\begin{table}[h!]
	\centering
	\begin{tabular}{|p{2cm}|p{2cm}|p{2cm}|p{2cm}|p{2cm}|p{2cm}|}
	\hline
		& L1TD & GTD-IST & GTD2-IST & TDC-IST & LARS-TD \\
	\hline
		Time (s) & \hspace{5mm}9.2160 & \hspace{5mm}9.5565 & \hspace{5mm}8.2130 & \
		\hspace{4mm}8.4660 & \hspace{3mm}118.5490 \\
	\hline
	\end{tabular}
	\caption{Time measurement of performing 2000 episodes.}
\end{table}


\subsection{Experiment Two: Off-Policy Learning}
To test the performance of the GTD-IST algorithms on the off-policy learning, 
we employ the well-known star example, proposed in \cite{bair:icml95}. 
It consists of seven states with one state being considered as
the ``center''. 
In each of the outer states, the agent can choose between two actions: 
either the ``solid'' action, which takes it to the center state with probability one, 
or the ``dotted'' action, which takes it to any of the other states with equal 
probability. 
Reward on all state transitions is equal to zero and the states are 
represented by tabular features as described in the original setting.
We add 20 noisy ``Gaussian'' features to the state
representation. 
The behavior policy chooses the ``solid'' action 
with the probability $1/7$ and the ``dotted'' otherwise,
while the estimation policy chooses always the ``dotted'' action.
%
The learning curves in Figure \ref{fig:offpolicy} shows that both GTD-IST and GTD2-IST
algorithms outperform their original counterparts consistently.

\begin{figure}[t!]
\begin{center}
    \includegraphics[width=0.47\textwidth]{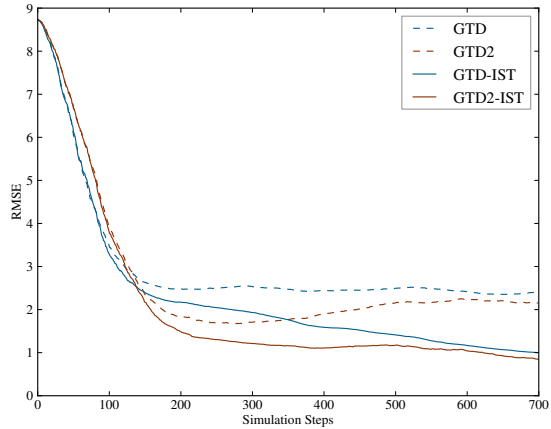}
    \vspace{-3mm}
    \caption{Off-policy example ($\alpha = 0.01, \beta = 0.1, \eta = 1$).}
    \label{fig:offpolicy}
\end{center}
\end{figure}



%

\section{Conclusions}
\label{sec:05}
This work combines the recently developed GTD methods with $\ell_{1}$
regularization, and proposes a family of GTD-IST algorithms.
We investigate the convergence properties of the proposed algorithms from
the perspective of stochastic optimization.
Preliminary experiments demonstrate that the proposed family of GTD-IST algorithms
outperform all their original counterparts and two existing $\ell_{1}$ regularized 
TD algorithms.
Being aware of advanced developments in the community of sparse representation, 
we project to employ further state-of-the-art algorithms of sparse representation
to RL.
For example, the IST algorithms are usually known to be slow compared to 
other advanced $\ell_{1}$ minimization algorithms.
Applying more efficient $\ell_{1}$ minimization algorithms, such as \cite{beck:siims09}, 
to TD learning are of great interests as the future work.
%

\section*{Acknowledgements}
This work has been partially supported by the International 
Graduate School of Science and Engineering (IGSSE), Technische
Universit\"at M\"unchen, Germany.
The authors would like to thank Christopher Painter-Wakefield for providing us 
with the Matlab implementation of the L1TD algorithm.

\bibliography{shenbib}

\end{document}